\newtheorem{lemma}{Lemma}
\setlist[enumerate]{nosep}
\setlist[itemize]{nosep}
\ifavssfinal\pagestyle{empty}\fi
\begin{document}

\title{Traffic Danger Recognition With Surveillance Cameras Without Training Data}

\author{Lijun Yu%
\thanks{This work was done when Lijun Yu was a visiting scholar at Carnegie Mellon University and later a research intern at MIX Labs.}\\
Peking University\\
Beijing, China\\
{\tt\small yulijun@pku.edu.cn}
\and
Dawei Zhang\\
MIX Labs\\
Beijing, China\\
{\tt\small dawei@mixlabs.xyz}
\and
Xiangqun Chen\\
Peking University\\
Beijing, China\\
{\tt\small cherry@sei.pku.edu.cn}
\and
Alexander Hauptmann\\
Carnegie Mellon Univ.\\ % to save space
Pittsburgh, PA, US\\
{\tt\small alex@cs.cmu.edu}
}

\maketitle
\thispagestyle{fancy}
\fancyhead{} 
\lhead{} 
\lfoot{978-1-5386-9294-3/18/\$31.00 \copyright2018 IEEE}
\cfoot{} 
\rfoot{}

\begin{abstract}
    We propose a traffic danger recognition model that works with arbitrary traffic surveillance cameras to identify and predict car crashes. There are too many cameras to monitor manually. Therefore, we developed a model to predict and identify car crashes from surveillance cameras based on a 3D reconstruction of the road plane and prediction of trajectories. For normal traffic, it supports real-time proactive safety checks of speeds and distances between vehicles to provide insights about possible high-risk areas. We achieve good prediction and recognition of car crashes without using any labeled training data of crashes. Experiments on the BrnoCompSpeed dataset show that our model can accurately monitor the road, with mean errors of 1.80\% for distance measurement, 2.77 km/h for speed measurement, 0.24 m for car position prediction, and 2.53 km/h for speed prediction.
\end{abstract}

\section{Introduction}

Surveillance cameras are widely installed, recording and storing massive data every day. But anomalous events are very rare and it is impossible for humans to monitor all these cameras. Car crashes are a crucial safety issue nowadays. Leveraging the recent development of computer vision algorithms, we are developing an automatic system for traffic surveillance on highways and streets.

We have built a model that can predict and recognize crashes from surveillance cameras. One benefit is that ambulances could immediately be sent to the crash scene saving lives. As accidents are relatively few, our model also supports proactive safety check based on normal traffic flow. Real-time speed and distance measurements will lead to insights about high-risk areas, such as where cars frequently get too close. This will help to improve traffic safety on the long term.

As accidents are rare in regular surveillance videos, it is arduous to collect and build a labeled dataset of car crashes covering all possible situations. Taking this reality into account, we propose a model that requires no labeled crash data for training. Physically, a collision between cars occurs when they gradually get closer and finally come into contact. We can predict their trajectories and check overlap positions indicating a collision. In severe crashes, vehicles are deformed and undetectable afterwards, but the crash is recognized ahead of time based on the predictions.

Our model consists of five steps to achieve the goal. Camera calibration method is applied to transform a point on the image to the road plane. Object detection and tracking algorithms identify a vehicle and trace its history. A 3D bounding box is built to get the projection of a car on the road. Position and speed are estimated and predicted for the future. Finally, the model can recognize danger based on distances between vehicles and overlaps in the trajectories.

We run this model on BrnoCompDataset~\cite{sochor2018comprehensive}, which contains highway surveillance videos with ground truth speed and distance measurements. We evaluate its performance for different steps and show convincing results.  It performs an effective 3D reconstruction of the road plane with a mean distance measurement error of 1.80\% along the road. Upon efficient detection and tracking of vehicles, it takes a precise measurement of speeds at a mean error of 2.77 km/h. It predicts vehicle trajectories reliably with errors of 0.24 m for car positions and 2.53 km/h for speeds averagely for 0.12 seconds ahead. This allows refined recognition of traffic danger from all the measurement and predictions. Importantly, all these results are achieved without any labeled training data.

The key contributions of this paper are:

\begin{itemize}
    \item A traffic danger recognition model for surveillance cameras based on a 3D reconstruction of the road and prediction of trajectories. It does not need any labeled training data of car crashes.
    \item Results show that the model monitors the road accurately, with mean errors of 1.80\% for distance measurement, 2.77 km/h for speed measurement, 0.24 m for car position prediction, and 2.53 km/h for speed prediction.
\end{itemize}

\section{Related Work}

\textbf{Camera Calibration.} Calibration methods are employed to derive the intrinsic (focal length, principal point) and extrinsic (rotation, translation) parameters of a camera. The accuracy of calibration is critical for the 3D reconstruction and further processing. Different methods may require various forms of user inputs, such as drawing parallel lines~\cite{lee2011robust}, camera position~\cite{wang2007research, pai2001adaptive}, and average vehicle size~\cite{dailey2000algorithm} or speed~\cite{schoepflin2003dynamic}. Fully automatic calibration is also achievable according to~\cite{dubska2015fully, sochor2017traffic}.

\textbf{Object Detection.} Object detection models are utilized to identify vehicles in video frames. These models such as Fast R-CNN~\cite{girshick2015fast} and Faster R-CNN~\cite{ren2015faster} rely on region proposal algorithms and deep convolution neural networks to get bounding boxes of objects. Mask R-CNN~\cite{he2017mask} further extends by predicting object masks simultaneously.

\textbf{Multiple Object Tracking.} Vehicle objects detected in adjacent frames need to be traced correctly. SORT algorithm~\cite{bewley2016simple} supports fast online tracking with Kalman Filter~\cite{kalman1960new} and Hungarian algorithm~\cite{kuhn1955hungarian}. Deep-SORT~\cite{wojke2017simple} additionally integrates appearance information to improve the performance.

\textbf{Anomaly Detection.} Traffic danger recognition is one specific aspect of anomaly detection. Multiple instance learning~\cite{sultani2018real} requires sufficient annotated training data. Motion pattern based learning for traffic anomaly~\cite{xu2018dual} also uses labeled data. But our approach is built upon no labeled videos of car crashes.

\section{Methodology}

Our traffic danger recognition model consists of five steps. Camera calibration provides geometry parameters and a transformation from image coordinates to road plane coordinates. Object detection and tracking algorithms provide the types, positions, and masks of vehicles and trace their histories. 3D bounding boxes are built to localize vehicles in the world space and then project to the road plane. Positions and speeds are calculated with adjacent frames plus smoothing and predicted for the future. Finally, we can recognize danger from vehicle distances and potential overlaps in the predictions.

\subsection{Camera Model and Calibration}
\label{sec:camera_calibration}

We adopt a traffic camera model similar to the paper of Sochor \etal~\cite{sochor2017traffic} as shown in Figure~\ref{fig:camera_model}. We follow the practice of Dubsk{\'a} \etal~\cite{dubska2014automatic} in setting up directions of three vanishing points $U, V, W$. With a known plane, points in an image can be reprojected to points on the plane in the world space. The reprojection enables a 3D reconstruction of vehicles on the road.

\begin{figure}[h]
    \setlength{\abovecaptionskip}{0.cm}
    \setlength{\belowcaptionskip}{-0.cm}
    \begin{center}
        \includegraphics[width=0.6\linewidth]{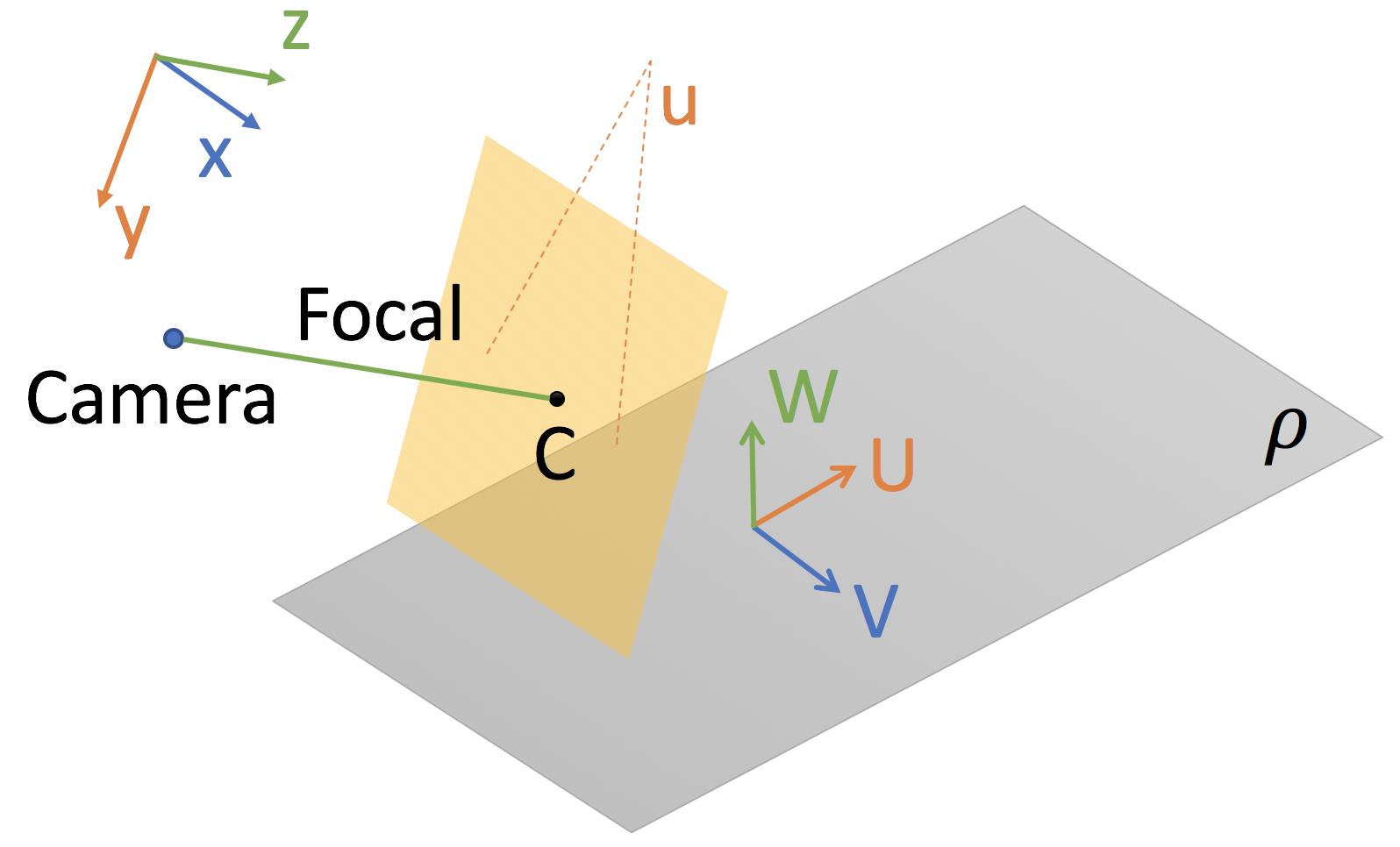}
    \end{center}
    \caption{Traffic camera model: $x, y, z$ define world coordinate system, where $x$-$y$ plane is parallel to the image and $z$ passes through its top left. Camera is on the $x$-$y$ plane and points to the principal point $C$ at the center of the image. $\rho$ is the road plane. $U, V, W$ are directions of vanishing points, $U$ in the direction of traffic, $V$ parallel to the road and perpendicular to $U$, and $W$ perpendicular to $\rho$.}
    \label{fig:camera_model}
\end{figure}

Although some automatic calibration methods have been developed, they do not achieve perfect performance in our model. So we remain using a manual calibration which requires labeling two groups of parallel lines of each camera view. Then we derive two vanishing points in the image space using a least square error method as in~\cite{lee2011robust}. With Algorithm~\ref{alg:vp_plane} extracted from the supplementary material of the dataset~\cite{sochor2018comprehensive}, we can derive the road plane in the world space and project image points to world points on the plane.

We rotate the world coordinate system to make the $x$-$z$ plane parallel to the road plane, so we can get plane coordinates of a point by omitting $y$ axis. Rotation parameters $\alpha, \beta, \gamma$ are acquired by solving Equation~\ref{eq:rotate}.

\begin{equation}
    \setlength{\belowdisplayskip}{2pt}
    \setlength{\abovedisplayskip}{2pt}
    \label{eq:rotate}
    \begin{aligned}
    \left[
        \begin{smallmatrix}
            1 & 0 & 0 & 1 \\
            0 & 1 & 0 & 1 \\
            0 & 0 & 1 & 1
        \end{smallmatrix}
    \right] \cdot \left[
        \begin{smallmatrix}
            1 & 0 & 0 & 0 \\
            0 & \cos{\alpha} & \sin{\alpha} & 0 \\
            0 & -\sin{\alpha} & \cos{\alpha} & 0 \\
            0 & 0 & 0 & 1
        \end{smallmatrix}
    \right] \cdot \left[
        \begin{smallmatrix}
            \cos{\beta} & 0 & -\sin{\beta} & 0 \\
            0 & 1 & 0 & 0 \\
            \sin{\beta} & 0 & \cos{\beta} & 0 \\
            0 & 0 & 0 & 1
        \end{smallmatrix}
    \right] \\
    \cdot \left[
        \begin{smallmatrix}
            \cos{\gamma} & \sin{\gamma} & 0 & 0 \\
            -\sin{\gamma} & \cos{\gamma} & 0 & 0 \\
            0 & 0 & 1 & 0 \\
            0 & 0 & 0 & 1
        \end{smallmatrix}
    \right] = \left[
        \begin{smallmatrix}
            \frac{V}{\left\|V\right\|} \\
            \frac{W}{\left\|W\right\|} \\
            \frac{U}{\left\|U\right\|}
        \end{smallmatrix}
    \right]
    \end{aligned}
\end{equation}

\begin{algorithm}
    \caption{Project an image point to a world point on the road plane. $f$ denotes the focal length. For points, lower case represents image coordinates and upper case stands for world coordinates. $d$ is an arbitrary offset of the plane, usually set to 10 as in~\cite{sochor2018comprehensive}.}
    \label{alg:vp_plane}
    \begin{algorithmic}
        \REQUIRE Image point $p = [p_x, p_y]$, plane offset $d$
        \REQUIRE Calibration $u = [u_x, u_y], v = [v_x, v_y], c = [c_x, c_y]$
        \ENSURE World point on the plane $P = [P_x, P_y, P_z]$
        \STATE $f = \sqrt{-(u - c) \cdot (v - c)}$
        \STATE $U = [u_x, u_y, f], V = [v_x, v_y, f], C = [c_x, c_y, 0]$
        \STATE $W = [W_x, W_y, W_z] = (U - C) \times (V - C)$
        \STATE $w = [w_x, w_y] = \frac{[W_x, W_y]}{W_z} \cdot f + c$
        \STATE $n = [n_x, n_y, n_z] = [w_x, w_y, f] - C$
        \STATE $\rho = [a, b, c, d] = [\frac{n_x}{\left\|n\right\|}, \frac{n_y}{\left\|n\right\|}, \frac{n_z}{\left\|n\right\|}, d]$
        \STATE $g = [p_x, p_y, f] - C$
        \STATE $t = -\frac{\rho \cdot [c_x, c_y, 0, 1]}{[a, b, c] \cdot g}$
        \STATE $P = C + t \cdot g$
    \end{algorithmic}
\end{algorithm}

\subsection{Object Detection and Tracking}

We select Mask R-CNN by He \etal~\cite{he2017mask} as our object detection model, which outputs detection scores, object types, bounding boxes, and object masks. We use Abdulla's implementation~\cite{matterport_maskrcnn_2017} with trained weights on Microsoft COCO dataset~\cite{lin2014microsoft} and select three types of objects as targets: car, bus, and truck. Then we apply a filter to the detected objects as shown in Figure~\ref{fig:detection}. The filter follows three rules:

\begin{enumerate}
    \item Vehicles should not be too small in size.
    \item Vehicles should be in the road area.
    \item Vehicles should be completely visible.
\end{enumerate}

\begin{figure}[htbp]
    \setlength{\abovecaptionskip}{0.cm}
    \setlength{\belowcaptionskip}{-0.cm}
    \begin{center}
        \includegraphics[width=\linewidth]{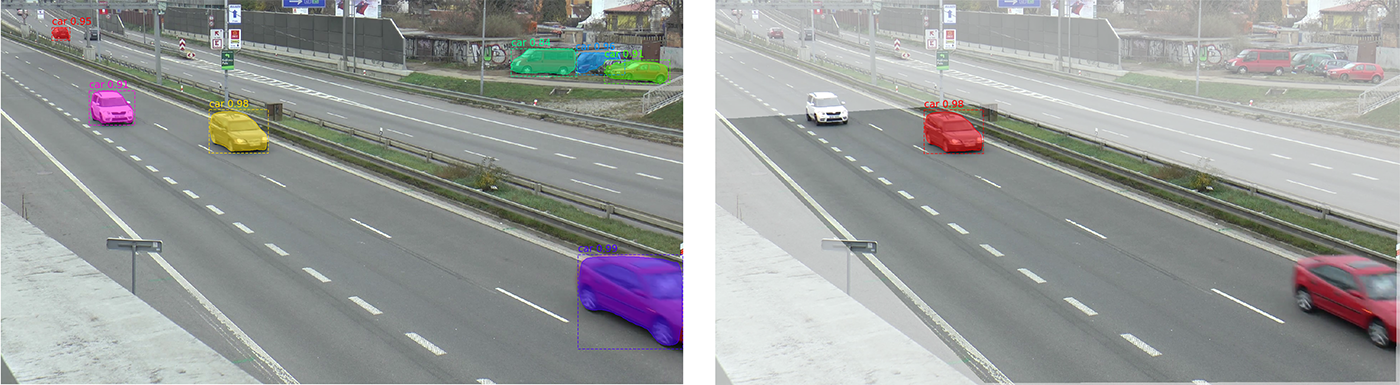}
    \end{center}
    \caption{Object detection: raw detections (left), and filtered objects (right). The white car at the top left is filtered by rule 1, the red at the bottom right by 3, and the cars at the top right by 2.}
    \label{fig:detection}
\end{figure}

We use Deep SORT by Wojke \etal~\cite{wojke2017simple} to track vehicles across frames. Each vehicle is supposed to get a unique ID from the tracking model, and it is robust through brief loss of detection.

\subsection{3D Bounding Box}

We estimate the contour of a vehicle with its mask from Mask R-CNN, using the algorithm by Suzuki \etal~\cite{suzuki1985topological}. For each of the three vanishing points, we calculate the tilt angles of the lines passing that vanishing point and each point in the contour. In this way we find the tangent lines of the contour passing three vanishing lines. We alter the algorithm from Sochor~\cite{sochor2014traffic} to build 3D bounding boxes of cars as described in Algorithm~\ref{alg:3d_box} and Figure~\ref{fig:3d_bounding_box}.

\begin{algorithm}
    \caption{Build 3D bounding box with tangent lines from vanishing points. Lines with subscript $min$ denote the lines with the minimum tilt angle, and $max$ the maximum. Position of the points are shown in Figure~\ref{fig:3d_bounding_box}.}
    \label{alg:3d_box}
    \begin{algorithmic}
        \REQUIRE Tangent lines from vanishing points $l_{U, min}, l_{U, max},$ \\
        $ l_{V, min}, l_{V, max}, l_{W, min}, l_{W, max}$
        \ENSURE 3D bounding box $(A, B, C, D, E, F, G, H)$
        \STATE $A = l_{U, max} \cap l_{V, min}$
        \STATE $B = l_{V, min} \cap l_{W, max}$
        \STATE $D = l_{U, max} \cap l_{W, min}$
        \STATE $F = l_{U, min} \cap l_{W, max}$
        \STATE $G = l_{V, max} \cap l_{U, min}$
        \STATE $H = l_{V, max} \cap l_{W, min}$
        \STATE $E_F = \overline{FV} \cap \overline{AW}$
        \STATE $E_H = \overline{HU} \cap \overline{AW}$
        \IF{$\left| AE_F \right| \ge \left| AE_H \right|$}
            \STATE $E = E_F$
        \ELSE
            \STATE $E = E_H$
        \ENDIF
        \STATE $H = \overline{CV} \cap \overline{BU}$
    \end{algorithmic}
\end{algorithm}

\begin{figure}[h]
    \setlength{\abovecaptionskip}{0.cm}
    \setlength{\belowcaptionskip}{-0.cm}
    \begin{center}
        \includegraphics[width=\linewidth]{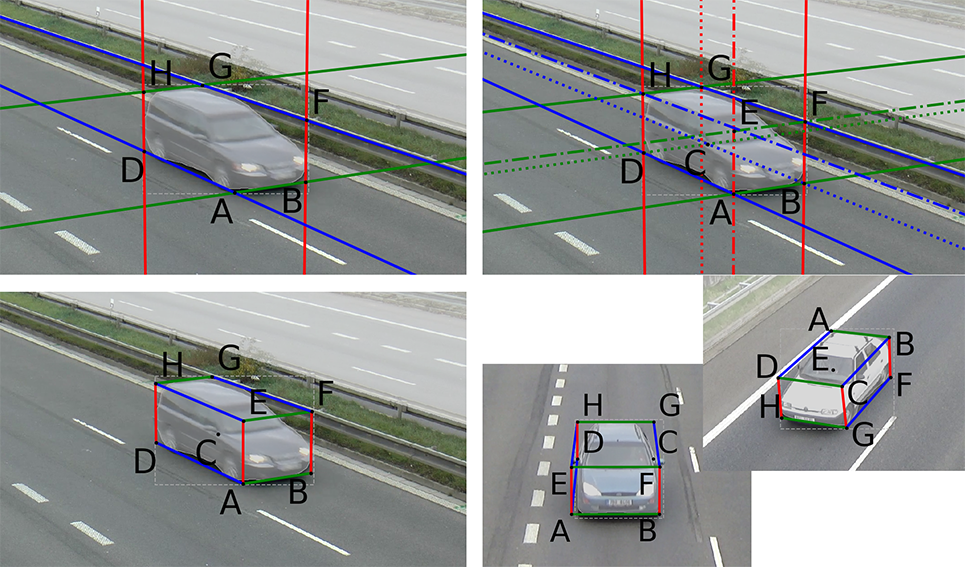}
    \end{center}
    \caption{3D bounding box: tangent lines of contour and their intersections (top left), derived lines and intersections (top right), the final result (bottom left), and vehicles in other angles of view (bottom right). Lines in colors of blue, green, and red pass through $u, v, w$ respectively.}
    \label{fig:3d_bounding_box}
\end{figure}

\subsection{Trajectory Prediction}

To get the current location of a vehicle, we can find the bottom of the 3D bounding boxes and project them to the road plane according to Section~\ref{sec:camera_calibration}. The set of the bottom points $S$ relies on the direction of the vehicle as:
\begin{equation}
    \setlength{\belowdisplayskip}{2pt}
    \setlength{\abovedisplayskip}{2pt}
    S =
    \begin{cases}
        \{A, B, C, D\} \mbox{, when} \tan{\overrightarrow{DA}} \ge 0\\
        \{H, G, F, E\} \mbox{, otherwise}
    \end{cases}
\end{equation}
The center position of a vehicle is calculated by 
\begin{equation}
    \setlength{\belowdisplayskip}{2pt}
    \setlength{\abovedisplayskip}{2pt}
    c = average(S)
\end{equation}
and a recent speed $v_r$ is calculated from adjacent frames as
\begin{equation}
    \setlength{\belowdisplayskip}{2pt}
    \setlength{\abovedisplayskip}{2pt}
    v_{r, t} = (c_t - c_{t-1}) \times \mathit{fps}
\end{equation}
where $t$ denotes frame number and $\mathit{fps}$ is the frame rate of the video. Exponential smoothing is applied to get a smoothed speed $v_s$ as
\begin{equation}
    \setlength{\belowdisplayskip}{2pt}
    \setlength{\abovedisplayskip}{2pt}
    v_{s, t} = \delta \times v_{s, t-1} + (1 - \delta) \times v_{r, t}
\end{equation}
With an optional scale factor $\lambda$, we are able to know the real world value of the speed.

To predict the trajectories, we assume:
\begin{enumerate}
    \item The future is divided into time slots with equal lengths.
    \item The vehicle centers follow normal distributions.
    \item The vehicle shapes do not change.
\end{enumerate}
We predict speed, acceleration, center coordinates and variance for the beginning of each slot as a snapshot. Within a slot, we assume there are fixed acceleration and variance. Then the speed and center coordinates can be calculated according to kinematics rules. In this way, predictions are available for an arbitrary time in the future.

For now, we are using a simple linear prediction method with the real situation as the only one snapshot and assuming the acceleration is always zero. Models of conditional random fields~\cite{lafferty2001conditional} and long short-term memory~\cite{hochreiter1997long} are planned to be tested in the future.

\subsection{Danger Recognition}

We use two ways to recognize dangerous situations. The first one is the distance measurement between vehicles. It not only tells where cars are going to crash but provides a proactive safety check for areas where cars often get too close, as well. The second one is called danger map, which detects overlap of vehicles in the predictions that indicates crashes.

The distance between two vehicles is defined as the minimum distance between two points from two quadrangles respectively.

\begin{lemma}
    Let $A, B$ be the pair with the minimum distance among all pairs of points from two quadrangles respectively, then at least one of $A, B$ must be a vertex.
    \label{lem:distance}
\end{lemma}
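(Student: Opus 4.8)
The plan is to argue by contradiction, showing that the minimizing pair cannot have \emph{both} of $A$ and $B$ lying in the relative interior of an edge. First I would record the routine reductions. Each quadrangle is compact, so a minimizing pair exists; I will assume the two quadrangles are disjoint (the overlapping case is the crash situation, where the distance is $0$ and the statement is not the one in use), so the minimum distance is strictly positive. Then $A$ and $B$ must lie on the boundaries of their respective quadrangles, and in fact neither can be an interior point of its quadrangle: an interior point has a full neighbourhood inside the region, so one could move it a little straight toward the other point, staying inside and strictly decreasing the distance, contradicting minimality. Since the boundary of a quadrangle is the union of its four edges, $A$ lies on some edge $e_1$ and $B$ on some edge $e_2$.

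Next I would apply a first-order optimality argument along the edges. Suppose, for contradiction, that neither $A$ nor $B$ is a vertex. Then $A$ is in the relative interior of $e_1$ and $B$ in the relative interior of $e_2$, so each point can be perturbed in both directions along its edge. Since $|A-B|$ is minimal, the derivative of $s\mapsto\tfrac12\,|A-B|^2$ under these perturbations vanishes at $s=0$, which yields $\langle A-B,\mathbf{d}_1\rangle = 0$ and $\langle A-B,\mathbf{d}_2\rangle = 0$, where $\mathbf{d}_1,\mathbf{d}_2$ are the direction vectors of $e_1,e_2$. Because $A\neq B$, this forces $\mathbf{d}_1\parallel\mathbf{d}_2$; that is, $e_1$ and $e_2$ are parallel and the segment $AB$ is a common perpendicular of the two edges.

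Finally I would slide the pair onto a vertex. Writing $\mathbf{d}$ for the common edge direction, the translated pair $(A+s\mathbf{d},\,B+s\mathbf{d})$ has exactly the same separation $|A-B|$ for every $s$ for which both translated points remain on $e_1$ and $e_2$. Since the edges are bounded segments, moving $s$ in one direction or the other eventually forces one of the two points to reach an endpoint of its edge, i.e.\ a vertex, with the distance unchanged and hence still minimal. Thus there is a minimizing pair in which at least one point is a vertex, and reading ``the pair with the minimum distance'' as a choice among minimizing pairs (or noting that if the minimizer is unique it must already have had this form), the lemma follows.

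I expect the only genuine obstacle to be this degenerate parallel-edge case: it is precisely what blocks a one-line proof and what forces the statement to be understood as ``some minimizing pair has a vertex'' rather than about a literally unique pair. Everything else — existence of the minimizer by compactness, the reduction to the boundary, and the perpendicularity condition — is straightforward. As an alternative to the sliding step, one can instead invoke directly the elementary fact that the minimum distance between two parallel line segments is attained at an endpoint of one of them, which gives the same conclusion.
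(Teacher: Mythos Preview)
Your proposal is correct and follows essentially the same approach as the paper: argue by contradiction that both points lie in the interior of edges, use optimality to force the edges to be parallel (the paper states this as ``if $a$ is not parallel to $b$, the nearest distance cannot be at the middle of both edges''), and then slide along the common direction to a vertex. Your version is considerably more careful---you make explicit the compactness, the reduction to the boundary, the first-order perpendicularity condition, and the non-uniqueness caveat in the parallel case---but the underlying argument is the same.
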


\begin{proof}
    Suppose both $A, B$ are not vertices, so each of them is on an edge, namely $a, b$. If $a \parallel b$, there must be another pair of points consisting of at least one vertex that has an equal distance. If $a$ is not parallel to $b$, then the nearest distance between $a$ and $b$ cannot be at the middle of both edges, which contradicts the suppose. Therefore, at least one of $A, B$ is a vertex.
\end{proof}

With Lemma~\ref{lem:distance}, we can calculate the minimum distance between two quadrangles $A_1B_1C_1D_1, A_2B_2C_2D_2$ as:

\begin{equation}
    \setlength{\belowdisplayskip}{2pt}
    \setlength{\abovedisplayskip}{2pt}
    \begin{aligned}
        d_{qq}(A_1B_1&C_1D_1, A_2B_2C_2D_2) = \\
        &\begin{aligned}
            \min( & \min_{P=A_1B_1C_1D_1}{d_{pq}(P, A_2B_2C_2D_2)}, \\
            & \min_{P=A_2B_2C_2D_2}{d_{pq}(P, A_1B_1C_1D_1)})
        \end{aligned}
    \end{aligned}
\end{equation}
\begin{equation}
    \setlength{\belowdisplayskip}{2pt}
    \setlength{\abovedisplayskip}{2pt}
    d_{pq}(P, ABCD) = \min_{e=AB, BC, CD, DA}d_{pe}(X, e)
\end{equation}
where $d_{pe}$ is the distance between a point and an edge. In this way, the minimum distance is calculated from only 32 candidates. Distances for all vehicle pairs are calculated and alerted when less than a threshold.

We accumulate the probability of a car box based on the distribution of its center to get the heat map of a vehicle. It represents the probability of its position at a specific time in the future. Then we aggregate the heat maps of all the vehicles in a scene into a danger map. A danger map represents the probability of coexistence of two or more vehicles in the same location. Figure~\ref{fig:danger_recognition} shows a sample result of danger recognition.

\begin{figure}[h]
    \setlength{\abovecaptionskip}{0.cm}
    \setlength{\belowcaptionskip}{-0.cm}
    \begin{center}
        \includegraphics[width=\linewidth]{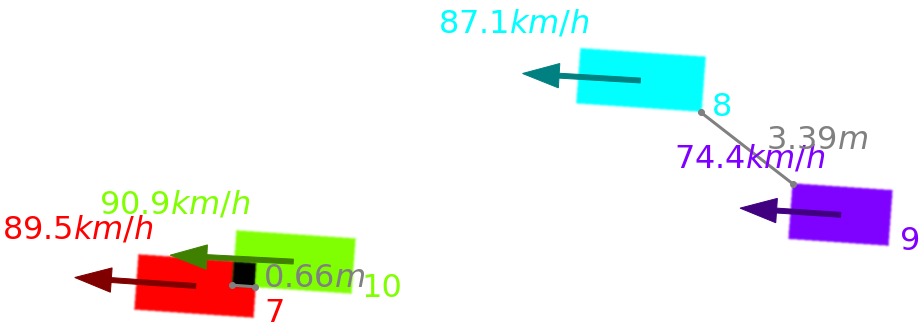}
    \end{center}
    \caption{Danger recognition: four vehicles in a sample prediction of the road plane, with vehicle IDs at the bottom right and speeds at the top left. Distances are shown for nearing vehicles, and danger map is shown in black at the overlap of vehicle 7 and 10.}
    \label{fig:danger_recognition}
\end{figure}

\section{Experiments}

\subsection{Dataset and Setup}

We use BrnoCompDataset~\cite{sochor2018comprehensive} to evaluate the performance of our model. It consists of surveillance videos of 6 sessions from 3 directions on the highway in the Czech Republic. The dataset provides the ground truth of distance measurement lines and speed of vehicles from Lidar sensor. It also has calibration results from various systems~\cite{sochor2018comprehensive, sochor2017traffic}.

We run our model on each of the 18 videos for 10 minutes. The videos are processed at the original resolution of 1080p and downsampled from 50 fps to 25 fps. We do not use lower frame rate because the Deep-SORT model has worse performance when it is less than 25fps. We exploit the calibration results from~\cite{sochor2017traffic} which provides vanishing points $u, v$ acquired by manual calibration from parallel lines, along with scale factors inferred from speeds. We let the smoothing parameter $\delta = 0.86$ according to some preliminary experiments. The trajectory prediction is set for 0.12 and 0.24 seconds ahead, accordingly 3 and 6 frames.

\subsection{Calibration Error}

We measure the calibration error to test the provided calibration results and the correctness of our coordinates transformation algorithm which maps a point from the image space to the road plane space.

We calculate the distance of the given measurement lines in our plane coordinate system. The lines are divided into two groups according to their directions: toward $u$ or $v$. The average length of the given lines is different in each group. We collect absolute and relative errors of measured distances and report the mean and median values in Table~\ref{tab:calibration}. 

\begin{table}[h]
    \setlength{\abovecaptionskip}{0.cm}
    \setlength{\belowcaptionskip}{-0.cm}
    \begin{center}
        \begin{tabular}{|c|c|c|c|}
            \hline
             &  & Mean & Median \\ \hline
            \multirow{2}{*}{\begin{tabular}[c]{@{}c@{}}Distance\\ Toward $u$\end{tabular}} & Absolute Error (m) & 0.2618 & 0.1684 \\ \cline{2-4} 
             & Relative Error & 1.80\% & 1.42\% \\ \hline
            \multirow{2}{*}{\begin{tabular}[c]{@{}c@{}}Distance\\ Toward $v$\end{tabular}} & Absolute Error (m) & 0.1633 & 0.1646 \\ \cline{2-4} 
             & Relative Error & 2.06\% & 2.07\% \\ \hline
        \end{tabular}
    \end{center}
    \caption{Calibration error: distance measurement error on all videos}
    \label{tab:calibration}
\end{table}

The results show that our model can accurately measure distances in the real world based merely on surveillance camera views and calibration parameters. The error in each direction is much smaller than the shape of conventional vehicles. As of the high speed in the highway, these errors are even smaller than the movement of a vehicle between two adjacent frames. This model provides an effective 3D reconstruction of the road plane with little error.

\subsection{Vehicle Detection and Tracking Error}

We measure vehicle detection and tracking error to test the Mask R-CNN and Deep-SORT models. For each vehicle detected and tracked, we record the time and position of its every appearance. Based on the appearance history, we calculate an estimated period of the vehicle in the measurement area of Lidar sensors. The measurement area is considered to be the largest one if there are more than two Lidar sensors set up. Then we calculate the intersection of union (IoU) between the estimated period and the real period of existence in the ground truth to get a similarity matrix. Hungarian algorithm~\cite{kuhn1955hungarian} is employed to solve this matching problem. Additionally, matching results with IoU less than $l_{IoU}=0.5$ are dropped. We report the recall on each video for this evaluation in Table~\ref{tab:detection_tracking}.

We find that Mask R-CNN sometimes does not work at certain viewing angles or for certain types of vehicles. For lost detections, as long as the gap is short enough, Deep-SORT is still able to track. In other cases, however, tracking also fails and that causes the loss. Despite these, the combination of Mask R-CNN and Deep-SORT have achieved an overall $94.0\%$ recall rate, which shows that it is efficacious for the vehicle detection and tracking in this task.

\subsection{Speed Estimation Error}

We use the matched vehicles from the previous section to evaluate the performance of our speed estimation. As the ground truth only has the average speed for each vehicle, we use the smoothed speed of a vehicle at its last appearance for comparison. We collect absolute and relative errors of the estimated speed of each vehicle and report in Table~\ref{tab:speed}.

\begin{table}[h]
    \setlength{\abovecaptionskip}{0.cm}
    \setlength{\belowcaptionskip}{-0.cm}
    \begin{center}
        \begin{tabular}{|c|c|c|}
            \hline
             & Mean & Median \\ \hline
            Absolute Error (km/h) & 2.7708 & 1.8625 \\ \hline
            Relative Error & 3.68\% & 2.55\% \\ \hline
        \end{tabular}
    \end{center}
    \caption{Speed estimation error: estimated speeds compared with ground truth from Lidar sensors on all videos}
    \label{tab:speed}
\end{table}

According to the dataset, the average speed for each session is mostly between 60 km/h and 90 km/h. For highway traffic, a mean error of less than 2.77 km/h proves that our model can precisely measure the speeds of vehicles. This accurate measurement is the foundation for further predictions and danger recognition.

\subsection{Prediction Error}

We evaluate the two levels of predictions separately. For each level, we collect the absolute error of location prediction, plus the absolute and relative error of speed prediction of each vehicle. As the smoothed speed is not stable at the beginning, predictions from vehicles with a history of fewer than $l=5$ frames (0.2 seconds) are excluded. The mean and median values of each metric are shown in Table~\ref{tab:prediction}.

\begin{table}[h]
    \setlength{\abovecaptionskip}{0.cm}
    \setlength{\belowcaptionskip}{-0.cm}
    \begin{center}
        \begin{tabular}{|c|c|c|c|}
            \hline
            Level & +0.12s & Mean & Median \\ \hline
            \begin{tabular}[c]{@{}c@{}}Location\\ Prediction\end{tabular} & Absolute Error (m) & 0.2433 & 0.1736 \\ \hline
            \multirow{2}{*}{\begin{tabular}[c]{@{}c@{}}Speed\\ Prediction\end{tabular}} & Absolute Error (km/h) & 2.5313 & 1.8373 \\ \cline{2-4} 
             & Relative Error & 4.55\% & 2.52\% \\ \hline
        \end{tabular}
        \begin{tabular}{|c|c|c|c|}
            \hline
            Level & +0.24s & Mean & Median \\ \hline
            \begin{tabular}[c]{@{}c@{}}Location\\ Prediction\end{tabular} & Absolute Error (m) & 0.3563 & 0.3256 \\ \hline
            \multirow{2}{*}{\begin{tabular}[c]{@{}c@{}}Speed\\ Prediction\end{tabular}} & Absolute Error (km/h) & 3.0134 & 2.4995 \\ \cline{2-4} 
             & Relative Error & 5.71\% & 3.92\% \\ \hline
        \end{tabular}
    \end{center}
    \caption{Prediction error: location and speed prediction error of two levels on all videos}
    \label{tab:prediction}
\end{table}

Although the prediction mechanism currently deployed is rather simple, it provides results much beyond our expectations. As a vehicle at 75 km/h would move 2.5 meters in 0.12 seconds, a mean error of 0.24 m for location prediction is well acceptable. The difference between the mean and median values indicates some outliers are harming the performance, but we can still see that most of the predictions are within an error of 2km/h. For traffic on highways, crashes usually happen within 0.12 seconds, so it is enough for the danger map to work. Moreover, another prediction of +0.24s is there for more information beforehand, and it is reasonable to have a slightly larger error than +0.12s.

\begin{table*}[htbp]
    \begin{center}
        \begin{tabular}{|c|c|c|c|c|c|c|c|c|c|c|}
            \hline
            Video ID & 1C & 1L & 1R & 2C & 2L & 2R & 3C & 3L & 3R & 4C \\ \hline
            Vehicle Matching Recall & 95.0\% & 92.2\% & 97.0\% & 82.2\% & 92.6\% & 92.5\% & 81.8\% & 100\% & 100\% & 92.4\% \\ \hline
            Video ID & 4L & 4R & 5C & 5L & 5R & 6C & 6L & 6R &  & Mean \\ \hline
            Vehicle Matching Recall & 93.2\% & 98.2\% & 83.0\% & 98.9\% & 97.5\% & 98.8\% & 99.5\% & 96.4\% &  & 94.0\% \\ \hline
            \end{tabular}
    \end{center}
    \caption{Vehicle detection and tracking error: vehicle matching recall on each video. The number in Video ID is Session ID, and the letter denotes the direction according to C-center, L-left, R-right.}
    \label{tab:detection_tracking}
\end{table*}

\section{Conclusions}

We propose a traffic danger recognition model that works with arbitrary surveillance cameras. It does not require any labeled training data of crashes. The model consists of five steps: camera calibration, object detection and tracking, 3D bounding box, trajectory prediction, and danger recognition. We measure the performance with experiments step by step, presenting that it is accurate at the estimation of speed and position of vehicles by projecting to a 3D reconstructed road plane. It is suitable for crash detection and proactive safety checks.

A demo of our model working on a real crash scene can be found on Youtube\footnote{\url{https://www.youtube.com/playlist?list=PLssAerj8zfUR5wBc7N6gmCFTm0azCHSIf}}. In the future, a complete test set of video containing real crashes will be processed to report detection accuracy. Trajectory prediction model could be improved with conditional random fields or recurrent neural network. We will also test automatic camera calibration methods to obtain similar performance as manual calibration, then the system could function on arbitrary surveillance cameras with zero input.

{\small
\bibliographystyle{ieee}
\bibliography{reference}
}

\end{document}